\def\eqref#1{equation~\ref{#1}}
\def\1{\bm{1}}
\DeclareMathAlphabet{\mathsfit}{\encodingdefault}{\sfdefault}{m}{sl}
\SetMathAlphabet{\mathsfit}{bold}{\encodingdefault}{\sfdefault}{bx}{n}
\newcommand{\best}[1]{\textbf{#1}}
\newcommand{\second}[1]{\uline{#1}}
\newcommand{\SFRRC}{\ensuremath{\mathrm{SFR}}\xspace}
\newcommand{\RCcorr}{\ensuremath{\text{RC-Corr}}\xspace}
\newcommand{\RCMSE}{\ensuremath{\text{RC\text{-}MSE}}\xspace}
\title{Reverse--Complement Consistency for DNA Language Models}
\author{Mingqian Ma\\
Carnegie Mellon University\\
\texttt{mingqiam@andrew.cmu.edu} \\
}
\begin{document}

\maketitle
% ========================
% Abstract (polished, ICLR style)
% ========================
\begin{abstract}
A fundamental property of DNA is that the \textbf{reverse complement (RC)} of a sequence often carries identical biological meaning. 
However, state-of-the-art DNA language models frequently fail to capture this symmetry, producing inconsistent predictions for a sequence and its RC counterpart, which undermines their reliability. 
In this work, we introduce \textbf{Reverse-Complement Consistency Regularization (RCCR)}, a simple and model-agnostic fine-tuning objective that directly penalizes the divergence between a model's prediction on a sequence and the aligned prediction on its reverse complement.
We evaluate RCCR across three diverse backbones (Nucleotide Transformer, HyenaDNA, DNABERT-2) on a wide range of genomic tasks, including sequence classification, scalar regression, and profile prediction.
Our experiments show that RCCR substantially improves RC robustness by dramatically reducing prediction flips and errors, all while maintaining or improving task accuracy compared to baselines such as RC data augmentation and test-time averaging.
By integrating a key biological prior directly into the learning process, RCCR produces a single, intrinsically robust, and computationally efficient model fine-tuning recipe for diverse biology tasks. Code available at: \url{https://anonymous.4open.science/r/RCCR-D2D0}.
\end{abstract}

\section{Introduction}

DNA language models (DNA LMs) \citep{zhou2024dnabert2,dallatorre2025nucleotide,nguyen2023hyenadna,ma2025hybridnahybridtransformermamba2longrange} have become general-purpose backbones for genomic prediction and sequence design: after pretraining on raw genomes, a single backbone can be fine-tuned for diverse downstream tasks. Many of these tasks possess an explicit symmetry: labels are \emph{reverse–complement (RC) invariant} at the sequence level (e.g., promoter classification), or \emph{RC equivariant} at the profile level, where outputs must be aligned by a task-specific operator $\Pi$ (e.g., bin-wise outputs should be flipped along the sequence length axis, and strand channels swapped when present). Yet standard fine-tuning pipelines neither encode RC symmetry nor evaluate it systematically, leaving models sensitive to input orientation.

Empirically, reversing and complementing a sequence can alter a model’s output even when ground truth is unchanged or predictably transformed by $\Pi$. This orientation sensitivity degrades reliability (e.g., elevated flip rates between predictions on $x$ and $\mathrm{RC}(x)$), complicates interpretation, and can confound large-scale scans. Existing remedies target either the training or inference stages. \textbf{Test-time averaging (TTA)} averages predictions from $x$ and $\mathrm{RC}(x)$ at inference; it doubles inference cost, leaves the learned predictor unchanged, and offers no training-time control of inconsistency. \textbf{RC data augmentation} \citep{pmlr-v165-zhou22a} trains on both $x$ and $\mathrm{RC}(x)$ with identical labels, but it still does \emph{not} enforce agreement between the two orientations. Architectural approaches that hard-code RC equivariance (e.g., RC parameter sharing and recent RC-equivariant long-range models) \citep{shrikumar2017rcps,schiff2024caduceus} can reduce flexibility and may be incompatible with widely used pretrained backbones; they also cannot accommodate the tasks that requires explicit RC variation. For a detailed related work overview, see Appendix \ref{app:related_work}.

In this paper, we propose an \emph{architecture- and head-agnostic} drop-in objective for enforcing RC consistency during fine-tuning, called \textbf{Reverse–Complement Consistency Regularization (RCCR)}. The core idea is to regularize the predictor to agree across the two orientations \emph{after} task-aware alignment. Let $f_\theta(x)$ denote the model output and $\tilde f_\theta(x)=\Pi f_\theta(\mathrm{RC}(x))$ the aligned output for the reverse complement. We add to the task loss a lightweight consistency term $D\!\left(\phi\!\left(f_\theta(x)\right),\,\phi\!\left(\tilde f_\theta(x)\right)\right)$, where $\phi$ maps outputs to a common space (e.g., temperature-scaled probabilities for classification; raw or stabilized signals for regression/profiles). In practice, we instantiate $D$ as the symmetric KL divergence for classification, and as squared error or Poisson KL for regression and profiles.

We provide theoretical guarantees: symmetrization is risk non-increasing under RCCR, and with RC-symmetric labels, global minimizers are RC-consistent after alignment. For classification, the symmetric KL penalty controls the Jensen–Shannon divergence and is locally well-conditioned in logit space, which stabilizes gradients. These results explain why enforcing agreement during training should not sacrifice task risk when the label is RC-symmetric.

To make evaluation standard and comparable, we report RC-aware metrics alongside task metrics:
(i) \SFRRC, the class flip rate between $x$ and $\Pi f(\mathrm{RC}(x))$;
(ii) \RCcorr, the correlation between original and RC-aligned predictions.
We evaluate RCCR on heterogeneous backbones: Nucleotide Transformer, DNABERT-2, HyenaDNA ~\citep{dallatorre2025nucleotide,zhou2024dnabert2,nguyen2023hyenadna} and three task families (sequence-level classification, scalar regression, and profile regression), emphasizing that improvements are not an artifact of a particular tokenization or inductive bias. Across datasets, RCCR improves orientation robustness (lower \SFRRC and higher \RCcorr{}) while maintaining task accuracy, matching TTA-like robustness with better explainability and without its $2\times$ inference cost.
We also note that RCCR encodes an explicit RC prior and is not appropriate for intrinsically strand-specific targets unless channels are relabeled or masked; we include a strand-classification control to showcase this.

To summarize our contributions:

1. \textbf{Method.} We propose a drop-in fine-tuning objective (RCCR) that enforces RC consistency across classification, scalar regression, and profile heads without modifying the backbone.

2. \textbf{Evaluation.} We introduce a compact RC robustness suite ($\mathrm{SFR}$, RC-Corr) to report alongside task metrics, standardizing orientation-robustness reporting.

3. \textbf{Theory.} We prove that symmetrization is risk non-increasing and that, with RC-symmetric labels, global minimizers are RC-consistent; the symmetric KL classification penalty is stable and informative.

4. \textbf{Empirics \& scope.} We demonstrate across diverse backbones and tasks that RCCR improves RC robustness (and task performance due to theoretical symmetrization), and we include a negative control on strand-specific prediction clarifying when \emph{not} to apply RCCR.

The remainder of the paper is organized as follows: Section~\ref{sec:prelim} formalizes RC symmetry and task-specific alignment. Section~\ref{sec:method} presents the method in detail and our supporting theoretical guarantees. Section~\ref{sec:exp} presents experimental results and analysis.

\section{Preliminaries}
\label{sec:prelim}

\subsection{DNA language models}
Pretraining for DNA LMs typically uses masked-language modeling (MLM) or next-token prediction (NTP), learning token-level conditionals $p_\theta(\cdot \mid \cdot)$. We denote the pretrained \emph{backbone} by $h_\theta$ and attach a task \emph{head} $g_\theta$ at fine-tuning time; the predictor is $f_\theta = g_\theta \circ h_\theta$.

In MLM, a random index set $M \subseteq \{1,\dots,L\}$ is replaced by \texttt{[MASK]} to form $x_{\setminus M}$, and we maximize the masked likelihood
% replace the current equations in §2.1
\[
\mathcal{L}_{\mathrm{MLM}}(\theta)
= \mathbb{E}\!\left[-\sum_{i\in M}\log p_\theta\!\left(x_i \,\middle|\, x_{\setminus M}\right)\right].
\]

In NTP, the joint is factorized autoregressively and optimized by
\[
\mathcal{L}_{\mathrm{NTP}}(\theta)
= \mathbb{E}\!\left[-\sum_{i=1}^{L}\log p_\theta\!\left(x_i \,\middle|\, x_{<i}\right)\right],
\quad x_{<i}=(x_1,\dots,x_{i-1}).
\]
All methods here are agnostic to whether $h_\theta$ arose from MLM or NTP.

\subsection{Reverse complement and task-aware alignment}
Let $\mathcal{A}=\{A,C,G,T,N\}$. Define base complement $c:\mathcal{A}\!\to\!\mathcal{A}\quad$ by $c(A)=T$, $c(T)=A$, $c(C)=G$, $c(G)=C$, $c(N)=N$, and let $R$ reverse indices so $(R(x))_i=x_{L+1-i}$. We write the reverse complement as
\[
\tau(x)\;\coloneqq\;\mathrm{RC}(x)\;=\;R\!\big(c(x)\big),\qquad \mathrm{so}\quad \mathrm{RC}(x)_i=c\!\left(x_{L+1-i}\right),
\]
and note that $\tau$ is an involution: $\tau(\tau(x))=x$.

To compare predictions across orientations, we apply a task-aware alignment operator $\Pi$ to outputs computed on $\tau(x)$. For sequence-level heads, $\Pi=\mathrm{id}$. For position/bin-wise heads with $B=\lceil L/r\rceil$ bins and $K$ channels, $\Pi$ reverses the positional axis and, when present, permutes strand channels via a fixed self-inverse permutation $\pi$:
\[
\big(\Pi S(x)\big)_{b,k}\;=\;S\!\big(\tau(x)\big)_{B+1-b,\,\pi(k)}.
\]
In RC-symmetric settings the intended agreement is
\[
\phi\!\big(f_\theta(x)\big)\;=\;\phi\!\big(\Pi f_\theta(\tau(x))\big),
\]
where $\phi$ is the task-specific link (identity for real-valued outputs; softmax for logits). We use $\tau(\cdot)$ and $\mathrm{RC}(\cdot)$ interchangeably in what follows.

\subsection{Downstream tasks at arbitrary resolution}
We consider sequence-level and position/bin-wise prediction at arbitrary spatial resolution
$r\in\mathbb{N}$ (base-pair when $r=1$, otherwise $r$ bp per bin).

\textbf{Sequence-level classification.} Logits $z(x)\in\mathbb{R}^C$ with link $\phi(z)=\mathrm{softmax}(z/T)$ (temperature $T>0$, default $T{=}1$).

\textbf{Sequence-level regression.} Outputs $\mu(x)\in\mathbb{R}^d$ trained with mean squared or Huber losses; here $\phi=\mathrm{id}$.

\textbf{Bin-wise regression.} Outputs $S(x)\in\mathbb{R}^{B\times K}$ (rates, counts, or stabilized intensities), trained either by squared error on transformed targets (e.g., $\log(1{+}S^\ast)$) or by Poisson/negative binomial deviance.

\textbf{Bin-wise classification.}
Logits $Z(x)\in\mathbb{R}^{B\times C}$ and aligned logits $\tilde Z(x)$
with per-bin probabilities $P=\mathrm{softmax}(Z/T)$ and
$\tilde P=\mathrm{softmax}(\tilde Z/T)$; we average a symmetric divergence across bins:
\[
D(P,\tilde P)=\frac{1}{B}\sum_{b=1}^{B}\mathrm{SKL}\!\left(P_{b:},\,\tilde P_{b:}\right).
\]

Throughout, we use a link map $\phi$ to place outputs in a common space where divergences $D$ are well defined (matching the head-agnostic formulation used by RCCR in Sec.~\ref{sec:method}).

\section{Methodology: Reverse–Complement Consistency Regularization (RCCR)}
\label{sec:method}

\subsection{Motivation and definition}
\label{sec:method:motivation}
For many genomics targets, the physical orientation of the input sequence is irrelevant once predictions are expressed in the same coordinate frame. A model that gives consistent answers on $x$ and on $\mathrm{RC}(x)$—after alignment by $\Pi$—is therefore more robust and better matched to the inductive structure of the problem. We encode this preference through a simple penalty that enforces agreement between the two orientations while preserving the original task objective.

Let $\tilde f_\theta(x)=\Pi f_\theta(\mathrm{RC}(x))$ denote the aligned prediction on the reverse complement. Given a task loss $\ell$ and a divergence $D$ applied to linked outputs via $\phi$, the training objective augments the task loss with an RC‑consistency term:
\begin{equation}
\label{eq:rccr}
\mathcal{L}_{\mathrm{RCCR}}(\theta)
=\mathbb{E}_{(x,y)}\left[\ \ell\big(y,f_\theta(x)\big)
+\lambda\,D\Big(\phi\big(f_\theta(x)\big),\,\phi\big(\tilde f_\theta(x)\big)\Big)\right],\qquad \lambda\ge 0.
\end{equation}
Because the alignment $\Pi$ and the pair $(\phi,D)$ are chosen according to the task-specific output head, (\ref{eq:rccr}) applies uniformly across tasks.

\subsection{Instantiations across four task settings}
\label{sec:method:instantiations}
In sequence‑level \emph{classification}, the model outputs logits $z$ and $\tilde z$ for $x$ and $\mathrm{RC}(x)$, respectively. Using a temperature $T>0$, we set $\phi(z)=\mathrm{softmax}(z/T)$ so that $p=\phi(z)$ and $q=\phi(\tilde z)$ and take
\[
D\big(\phi(z),\phi(\tilde z)\big)=\mathrm{SKL}(p,q)=\mathrm{KL}(p\|q)+\mathrm{KL}(q\|p).
\]
In sequence‑level \emph{regression}, with predictions $\mu,\tilde\mu\in\mathbb{R}^d$, we keep $\phi=\mathrm{id}$ and penalize the discrepancy by a scale‑aware squared error or a robust alternative:
\[
D(\mu,\tilde\mu)=\frac{1}{2\sigma^2}\|\mu-\tilde\mu\|_2^2
\quad\text{or}\quad
D(\mu,\tilde\mu)=\mathrm{Huber}_\delta(\mu-\tilde\mu),\qquad \sigma>0.
\]
For \emph{bin‑wise regression} at base‑pair or binned resolution, we either work on stabilized targets (e.g., $\widehat S=\log(1+S^\ast)$ for labels $S^\ast$) and measure squared error after alignment,
\[
D\big(\widehat S_\theta(x),\widehat S_\theta^{\,\mathrm{rc}}(x)\big)
=\big\|\widehat S_\theta(x)-\Pi\,\widehat S_\theta(\mathrm{RC}(x))\big\|_2^2,
\]
or we adopt a parametric count model with rate tensors $\Lambda_\theta(x)$ and penalize a symmetrized Poisson KL (equivalently, deviance) across bins and tracks. Finally, in \emph{bin‑wise classification/annotation}, with per‑bin logits $Z$ and aligned logits $\tilde Z$, we form per‑bin probabilities $P=\mathrm{softmax}(Z/T)$ and $\tilde P=\mathrm{softmax}(\tilde Z/T)$ and average a symmetric divergence across bins:
\[
D(P,\tilde P)=\frac{1}{B}\sum_{b=1}^B \mathrm{SKL}\big(P_{b:},\tilde P_{b:}\big).
\]
When outputs include strand‑specific channels, $\Pi$ additionally swaps those channels; if certain tracks are intrinsically orientation‑specific, a binary mask can restrict the penalty to the invariant subset without changing the rest of the pipeline.

\subsection{Theoretical guarantees}
\label{sec:method:theory}
We here provide the essential properties that explain why RCCR (~\ref{eq:rccr}) promotes reverse–complement robustness without sacrificing task performance when the problem is RC-symmetric. Let $\tau:X\to X$ be the RC map and $\Pi:\mathcal{O}\to\mathcal{O}$ the task-aware output alignment; both are involutions ($\tau^2{=}\Pi^2{=}\mathrm{id}$). A link $\phi:\mathcal{O}\to\mathcal{Z}$ maps outputs to a space where a divergence $D:\mathcal{Z}\times\mathcal{Z}\to[0,\infty)$ is defined. Unless noted, we assume: (i) $o\mapsto \ell(y,o)$ is convex for each $y$; (ii) $u\mapsto D(u,v)$ is convex with $D(u,v){=}0\iff u{=}v$; and (iii) $o\mapsto D(\phi(o),v)$ is convex (satisfied by the heads in Sec.~\ref{sec:method:instantiations}).

Define the symmetrizer operator $S$ such that the RC-symmetrized predictor is
\begin{equation}
\label{eq:symmetrize}
\big(Sf\big)(x)\;=\;\tfrac12\Big(f(x)+\Pi f\big(\tau(x)\big)\Big),
\end{equation}
which obeys $\Pi (Sf)(\tau(x))=(Sf)(x)$ for all $x$, since
\[
\Pi(Sf)(\tau(x))\;=\;\Pi\!\left[\tfrac12\Big(f(\tau(x))\,+\,\Pi f\big(\tau(\tau(x))\big)\Big)\right]
=\tfrac12\Big(\Pi f(\tau(x))\,+\,\Pi^2 f(x)\Big)
=\;(Sf)(x),
\]
by the involutive properties of $\Pi$ and $\tau$.

\medskip\noindent\textbf{Proposition (Symmetrization is risk non-increasing).}
For any $\lambda\ge 0$ and any predictor $f$,
\[
\mathcal{L}_{\mathrm{RCCR}}(Sf)\ \le\ \mathcal{L}_{\mathrm{RCCR}}(f),
\]
This indicates that averaging $f(x)$ with its aligned RC counterpart cannot hurt the RCCR objective and typically helps if the two orientations differ. This formalizes why training that nudges toward $Sf$ increases RC robustness.

\medskip\noindent\textbf{Theorem (Equivariant minimizers under RC-symmetric labels).}
If the data distribution is RC-closed, $(X,Y)\stackrel{d}{=}(\tau(X),Y)$ with labels expressed in the frame of $X$, and the task loss is strictly convex in its prediction, then every global minimizer $f^\star$ of $\mathcal{L}_{\mathrm{RCCR}}$ is RC-invariant after alignment:
\[
\phi\big(f^\star(x)\big)\;=\;\phi\big(\Pi f^\star(\tau(x))\big)
\]
If the task-only risk admits a unique minimizer, it is already RC-invariant and remains optimal for any $\lambda\ge 0$.
Thus, when ground truth does not depend on orientation, the best predictor is orientation-consistent. Therefore, RCCR does not trade accuracy for symmetry at optimum.

\medskip\noindent\textbf{Corollary (SKL controls JS).}
For classification with $p=\mathrm{softmax}(z/T)$ and $q=\mathrm{softmax}(\tilde z/T)$ ($T>0$),
\[
\mathrm{JS}(p,q)\ \le\ \tfrac12\,\mathrm{SKL}(p,q),
\]
Furthermore, the SKL penalty is locally quadratic in the centered logit difference. Specifically, $\mathrm{SKL}\big(p(z), p(\tilde z)\big)$ is proportional to a Fisher-weighted squared norm of the difference $z-\tilde z$ (see Thm.~\ref{thm:skl_js_quad} for the full formulation). This ensures that the symmetric KL used by RCCR upper-bounds the Jensen–Shannon divergence and provides stable, informative gradients near agreement.

Detailed proofs of these results, along with additional complexity/theoretical results, are provided in Appendix~\ref{app:theory}.

\section{Experiments}
\label{sec:exp}

To validate the effectiveness and generality of Reverse-Complement Consistency Regularization (RCCR), we conducted a comprehensive benchmark across diverse DNA language models and genomic tasks. Our experimental design is set to be broad, pairing models of distinct tokenization methods and inductive biases with tasks ranging from sequence-level classification to position-wise profile regression. This ensures our conclusions are robust and not artifacts of a specific model architecture or task formulation. We also include a negative-control task on strand-specific prediction, clarifying when \emph{not} to apply RCCR.

\subsection{Backbones and Baselines}
\label{sec:exp:models}
Our evaluation includes three widely-used DNA foundation models:
\begin{itemize}
    \item \textbf{NT-v2}~\citep{dallatorre2025nucleotide}: An encoder-only transformer (50M parameters) using 6-mer tokenization.
    \item \textbf{DNABERT-2}~\citep{zhou2024dnabert2}: An encoder-only transformer (100M parameters) that utilizes Byte-Pair Encoding (BPE) tokenization.
    \item \textbf{HyenaDNA-Medium-160k}~\citep{nguyen2023hyenadna}: A long-context, convolution-based model that operates at single base-pair resolution with a 160\,kbp receptive field.
\end{itemize}
This selection covers heterogeneous tokenization strategies (k-mer, BPE, and single-nucleotide) and architectures, allowing us to assess the universal applicability of RCCR. For all models, we use the pretrained backbone $h_\theta$ and attach a task-specific head $g_\theta$ as described in Sec.~\ref{sec:prelim}.

We benchmark RCCR against two standard approaches for handling RC symmetry:
\begin{itemize}
    \item \textbf{RC-Aug}~\citep{schiff2024caduceus}: A data augmentation strategy where, during fine-tuning, each training sequence is replaced by its reverse complement with 50\% probability, while the label is preserved.
    \item \textbf{TTA}~\citep{pmlr-v165-zhou22a}: Test-Time Averaging, where predictions for a sequence and its reverse complement are averaged at inference time. This is applied to a model fine-tuned without explicit RC handling.
\end{itemize}
For a fair comparison, all models and baselines were trained using identical hyperparameters, fine-tuning epochs, and evaluation methods. The RCCR regularization strength $\lambda$ was selected for each task based on validation performance. For detail experiment setup, see Appendix \ref{app:exp_details}. We also provide a detailed ablation study on its impact and offer heuristic guidance for its selection in Appendix \ref{app:ablation}. All the experiments are conducted on a single NV H100 GPU.

\subsection{Tasks and Metrics}
\label{sec:exp:tasks}
We selected a versatile suite of tasks to demonstrate the broad utility of our method.

\textbf{Sequence-Level Classification.} We use the \emph{Nucleotide Transformers benchmark}~\citep{dallatorre2025nucleotide}, a standard for evaluating DNA foundation models. It includes eighteen classification datasets for predicting the presence of key regulatory elements, which we group into four families: promoter prediction, enhancer prediction, splice-site detection, and histone modification presence. These tasks are fundamentally RC-invariant.

\textbf{Sequence and Bin-Wise Regression.} To evaluate performance on quantitative tasks requiring long-range modeling, we use two tasks from the \emph{Genomics Long-Range Benchmark (LRB)}~\citep{trop2024lrb}:
\begin{itemize}
    \item \textbf{Bulk RNA Expression}: A sequence-level regression task to predict steady-state gene expression levels from DNA sequences.
    \item \textbf{CAGE Profile}: A bin-wise regression task to predict Cap Analysis Gene Expression (CAGE) profiles, which measure transcription start site activity at high resolution. The model predicts a vector of counts across sequence bins.
\end{itemize}
DNABERT-2's result is omitted for these two tasks as they require the extraction of fix interval prediction, which is incompatible with the dynamic length BPE tokenization of the model.

\textbf{Negative Control.} To confirm that RCCR is correctly encoding the RC-equivariance prior, we include a curated DNA strand classification task. The goal is to classify whether a given sequence originates from the positive or negative strand. This task is intrinsically orientation-dependent and therefore not RC-equivariant, providing a setting where RCCR is expected to be detrimental.

\textbf{Metrics.} We report standard task-specific metrics (MCC, $R^2$, and AUROC) alongside our proposed metrics for quantifying RC robustness:

\paragraph{Symmetry Flip Rate (SFR)}
For classification, this measures the fraction of examples whose predicted class flips when the input is reverse-complemented. With $p(x)=\mathrm{softmax}(z(x))$ and $\tilde p(x)=\Pi\,\mathrm{softmax}\big(z(\mathrm{RC}(x))\big)$:
\begin{equation}
\mathrm{SFR}\;=\;\frac{1}{N}\sum_{i=1}^{N}\mathbf{1}\left\{\arg\max_c p_c(x_i)\neq \arg\max_c \tilde p_c(x_i)\right\}.
\end{equation}

\paragraph{RC Correlation ($\mathrm{RC_{Corr}}$)}
For any task, this measures the Pearson correlation between the flattened output for an input $x$ and its aligned, reverse-complemented counterpart $\tilde y(x)=\Pi\,y(\mathrm{RC}(x))$:
\begin{equation}
\mathrm{RC_{Corr}}=\frac{1}{N}\sum_{i=1}^N \rho(\text{flatten}(y(x)),\text{flatten}(\tilde{y}(x)))
\end{equation}

\subsection{Sequence-Level Classification}

We first evaluate RCCR on the eighteen sequence-level classification tasks from the Nucleotide Transformer Benchmark~\citep{dallatorre2025nucleotide}, a standard suite for assessing DNA foundation models. Table~\ref{tab:nt-merged} summarizes the results, averaged across datasets within four major regulatory element families.

The results show that RCCR consistently achieves state-of-the-art or highly competitive task performance. Across all three backbones, it delivers the best or second-best AUPRC and MCC in nearly every category. This demonstrates that enforcing RC consistency does not come at the cost of predictive accuracy; in many cases, it provides a beneficial regularization effect that improves performance over the RC-Aug and TTA baselines.

The primary advantage of RCCR is most evident in the RC consistency metrics. Compared to RC-Aug, RCCR yields a model that is substantially more robust to input orientation, evidenced by consistently lower \SFRRC{} (fewer prediction flips) and higher \RCcorr{} (more aligned outputs). This is the direct result of RCCR's objective, which explicitly penalizes disagreement between the forward and reverse-complement predictions. In contrast, RC-Aug only ensures the model is exposed to both orientations during training but does not enforce agreement, making it a weaker and less direct regularizer for this specific symmetry.

While TTA also achieves strong task performance—in some cases outperforming RCCR (e.g., on splice sites with HyenaDNA)—it does so by masking the underlying model's inconsistency at inference time. This approach not only doubles inference cost but fails to produce a single, intrinsically robust model. RCCR, by contrast, integrates the symmetry directly into the learned weights, delivering a model that is both accurate and consistent by design.

\begin{table*}[htbp]
\centering
\footnotesize
\setlength{\tabcolsep}{2.6pt}
\renewcommand{\arraystretch}{0.96}
\caption{NT benchmark (per-family means) under Reverse–Complement Consistency Regularization (RCCR), RC data augmentation at 50\% (RC-Aug), and reverse–complement Test-Time Averaging (TTA). The full table is provided at Appendix \ref{app:nt_full_results}. Arrows indicate preferred direction. Best / second-best within each (Backbone, Task, Metric) are \best{bold} / \second{underlined}. SFR and RC\_Corr are omitted for TTA because they are trivially satisfied. Task abbreviations: Hist.=Histone Modification, Spl.=Splice Sites Prediction, Enh.=Enhancers, Prom.=Promoters.}
~\\
\label{tab:nt-merged}
\begin{adjustbox}{max width=\textwidth}
\begin{tabular}{ll|ccc|ccc|cc|cc}
\toprule
& & \multicolumn{3}{c|}{AUPRC $\uparrow$} & \multicolumn{3}{c|}{MCC $\uparrow$} & \multicolumn{2}{c|}{SFR $\downarrow$} & \multicolumn{2}{c}{RC\_Corr $\uparrow$} \\
\cmidrule(lr){3-5}\cmidrule(lr){6-8}\cmidrule(lr){9-10}\cmidrule(lr){11-12}
Backbone & Task & RCCR & RC-Aug & TTA & RCCR & RC-Aug & TTA & RCCR & RC-Aug & RCCR & RC-Aug \\
\midrule
\multirow{4}{*}{NT-v2}
& Hist. & \best{0.812} & \second{0.784} & 0.779 & \best{0.513} & \second{0.459} & 0.442 & \second{0.156} & \best{0.154} & \best{0.930} & \second{0.924} \\
& Spl.& \best{0.994} & \second{0.992} & 0.992 & \best{0.957} & \second{0.933} & 0.851 & \best{0.024} & \second{0.043} & \best{0.978} & \second{0.965} \\
& Enh.& \best{0.680} & \second{0.655} & 0.643 & \best{0.487} & \second{0.463} & 0.449 & \best{0.106} & \second{0.120} & \best{0.959} & \second{0.947} \\
& Prom. & \best{0.948} & 0.937 & \second{0.946} & \best{0.726} & 0.694 & \second{0.718} & \best{0.073} & \second{0.076} & \best{0.980} & \second{0.964} \\
\midrule
\multirow{4}{*}{HyenaDNA}
& Hist. & \best{0.777} & 0.767 & \second{0.767} & \best{0.444} & \second{0.432} & 0.418 & \second{0.125} & \best{0.113} & \second{0.888} & \best{0.894} \\
& Spl.& 0.718 & \second{0.720} & \best{0.884} & \second{0.397} & 0.394 & \best{0.437} & \second{0.114} & \best{0.108} & \best{0.948} & \second{0.907} \\
& Enh.& \best{0.650} & \second{0.641} & 0.626 & \best{0.456} & \second{0.440} & 0.409 & \best{0.108} & \second{0.119} & \second{0.949} & \best{0.954} \\
& Prom. & \best{0.940} & \second{0.928} & 0.925 & \best{0.708} & 0.643 & \second{0.665} & \best{0.074} & \second{0.087} & \best{0.965} & \second{0.953} \\
\midrule
\multirow{4}{*}{DNABERT2}
& Hist. & \second{0.802} & \best{0.804} & 0.802 & \best{0.517} & \second{0.490} & 0.467 & \second{0.145} & \best{0.119} & \second{0.805} & \best{0.919} \\
& Spl.& \second{0.972} & 0.945 & \best{0.979} & \best{0.845} & 0.801 & \second{0.828} & \best{0.080} & \second{0.101} & \best{0.939} & \second{0.886} \\
& Enh.& \best{0.699} & \second{0.678} & 0.654 & \best{0.487} & \second{0.485} & 0.466 & \second{0.147} & \best{0.143} & \best{0.941} & \second{0.935} \\
& Prom.& \best{0.961} & \second{0.953} & 0.951 & \best{0.778} & \second{0.767} & 0.742 & \second{0.079} & \best{0.070} & \best{0.960} & \second{0.958} \\
\bottomrule
\end{tabular}
\end{adjustbox}
\end{table*}

\SFRRC{} and \RCcorr{} are omitted for TTA because they are trivially satisfied—TTA enforces identical outputs for $x$ and $\mathrm{RC}(x)$ after alignment. Overall, RCCR matches or outperforms both baselines across all three backbones. While RCCR and RC-Aug are often competitive on label-matching metrics, RCCR directly learns the intrinsic alignment between forward and reverse-complement predictions rather than relying only on exposure to both orientations.

\subsection{Sequence-Level Regression}
To test RCCR on a regression problem, we use the Bulk RNA expression prediction task from the Genomics Long-Range Benchmark~\citep{trop2024lrb}. This task requires predicting expression levels across 218 cell types from the input DNA sequence. The results are shown in Table~\ref{tab:bulk_rna_regression}.

On all backbones, RCCR again achieves the best/competitive predictive performance on all three task metrics, underscoring its value as a regularizer. However, we observe that RC-Aug yields slightly better consistency scores for this specific backbone. This suggests a potential interplay between the regularization strategy and the model's architecture. 

\begin{table*}[htbp]
\caption{Sequence-level regression results for the Bulk RNA expression prediction task from the Genomics Long-Range Benchmark (LRB)~\citep{trop2024lrb}. We compare \textbf{RCCR} against \textbf{Vanilla}, \textbf{RC-Aug}, and \textbf{TTA}. Best and second-best results within a backbone are \textbf{bold} and \underline{underlined}. RCCR consistently provides the best performance on core prediction metrics (RMSE, $R^2$, Spearman).}
\label{tab:bulk_rna_regression}
\centering
\begin{tabular}{ll|ccc|cc}
\toprule
& & \multicolumn{3}{c|}{Task Performance Metrics} & \multicolumn{2}{c}{RC Consistency Metrics} \\
\cmidrule(lr){3-5}\cmidrule(lr){6-7}
Backbone & Method & RMSE $\downarrow$ & $R^2$ $\uparrow$ & Spearman $\uparrow$ & RC-MSE $\downarrow$ & RC-Corr $\uparrow$ \\
\midrule
\multirow{4}{*}{NT\textendash v2}
& Vanilla   & 0.7499 & 0.3777 & 0.6903 & \second{0.0435} & \second{0.9301} \\
& RC-Aug    & \second{0.7086} & \second{0.4444} & \second{0.7334} & \best{0.0261} & \best{0.9595} \\
& TTA       & 0.7520 & 0.3743 & 0.6863 & \second{0.0435} & \second{0.9301} \\
& RCCR      & \best{0.6802} & \best{0.4880} & \best{0.7565} & 0.0538 & 0.9293 \\
\midrule
\multirow{4}{*}{HyenaDNA}
& Vanilla   & 0.7344 & 0.4031 & 0.6989 & 0.0788 & 0.9154 \\
& RC-Aug    & 0.7428 & 0.3894 & 0.6972 & \second{0.0628} & \second{0.9261} \\
& TTA       & \second{0.7272} & \second{0.4148} & \second{0.7084} & 0.0788 & 0.9154 \\
& RCCR      & \best{0.7165} & \best{0.4318} & \best{0.7250} & \best{0.0416} & \best{0.9365} \\
\bottomrule
\end{tabular}
\end{table*}

\subsection{Bin-wise Regression}
We next demonstrate RCCR's applicability to tasks requiring nontrivial equivariance. We use the CAGE profile prediction task from Genomics LRB~\citep{trop2024lrb}, where the model must predict transcriptional activity across 128-bp bins. For this task, RC equivariance means that the output profile for $\mathrm{RC}(x)$ should be the reversed profile of $x$. Our alignment operator $\Pi$ correctly handles this by reversing the bin axis before comparison.

As shown in Table~\ref{tab:reg-merged}, RCCR delivers a substantial improvement in performance. Across both NT-v2 and HyenaDNA backbones, RCCR achieves the best task performance, significantly reducing RMSE and increasing Spearman correlation compared to both RC-Aug and TTA. This strong performance confirms that our general formulation effectively enforces complex equivariances by applying the consistency loss to correctly aligned outputs. The results show that RCCR is not limited to simple invariant tasks but is a versatile tool for a wider range of genomic profile prediction problems.

\begin{table*}[htbp]
\centering
\footnotesize
\setlength{\tabcolsep}{2.6pt}
\renewcommand{\arraystretch}{0.96}
\caption{CAGE profile task under Reverse–Complement Consistency Regularization (RCCR), RC data augmentation at 50\% (RC-Aug), and reverse–complement Test-Time Averaging (TTA). Arrows indicate preferred direction. Best / second-best within each (Backbone, Metric) are \best{bold} / \second{underlined}. RC\_Corr is omitted for TTA.}
~\\
\label{tab:reg-merged}
\begin{adjustbox}{max width=\textwidth}
\begin{tabular}{l|ccc|ccc|cc}
\toprule
& \multicolumn{3}{c|}{RMSE $\downarrow$} & \multicolumn{3}{c|}{Spearman $\uparrow$} & \multicolumn{2}{c}{RC\_Corr $\uparrow$} \\
\cmidrule(lr){2-4}\cmidrule(lr){5-7}\cmidrule(lr){8-9}
Backbone  & RCCR & RC-Aug & TTA & RCCR & RC-Aug & TTA & RCCR & RC-Aug \\
\midrule
NT\textendash v2 
& \best{0.2454} & \second{0.2619} & 0.4979 
& \best{0.2496} & \second{0.1903} & 0.0009 
& \second{0.9397} & \best{0.9506} \\
HyenaDNA  
& \best{0.2572} & 0.2706 & \second{0.2610} 
& \best{0.1949} & 0.1407 & \second{0.1764} 
& \best{0.8123} & \second{0.7869} \\
\bottomrule
\end{tabular}
\end{adjustbox}
\end{table*}

\subsection{Non-RC Task: A Negative Control}

To define the application boundaries of RCCR, we designed a negative control task where the RC symmetry prior is explicitly violated: DNA strand classification. The goal is to predict whether a gene's promoter sequence comes from the `+` or `-` strand, a label that is by definition dependent on orientation. We curated a dataset for this task using GENCODE v49 gene annotations~\citep{frankish2021gencode} and the hg38 reference genome~\citep{schneider2017evaluation}. The dataset comprises 62,953 training, 7,861 validation, and 7,864 test sequences of 1,024~bp centered on transcription start sites (TSS). For this experiment, we set a high regularization strength ($\lambda=0.5$) to clearly illustrate the effect. Because strand classification is intrinsically orientation-dependent (i.e., not RC-equivariant), we compare RCCR only to the vanilla fine-tuning baseline; RC-Aug and TTA inject an invalid symmetry prior and would conflate conclusions.

The results, presented in Table~\ref{tab:strand_classification}, confirm our hypothesis. As expected, applying RCCR is detrimental to task performance, causing a noticeable drop in both AUPRC and MCC for both backbones. This is because the regularizer is forcing the model to learn a symmetry that is directly at odds with the ground-truth labels.

Crucially, the RC consistency metrics show that the RCCR objective is working as designed, even while hurting performance. It forces the model's predictions to become more symmetric, driving the strongly negative \RCcorr{} of the vanilla model towards zero or even making it positive, and significantly reducing the \SFRRC. This experiment successfully demonstrates that RCCR is not a universal regularizer but a precise tool for encoding a known biological prior. It underscores the importance of applying RCCR only to tasks that are genuinely RC-invariant or equivariant.

\begin{table*}[ht]
\caption{
    \textbf{Negative control results on the DNA strand classification task.}
    This task is intrinsically orientation-dependent, meaning the RC-equivariance prior is invalid.
    We compare a vanilla fine-tuned model against one trained with a high RCCR regularization strength ($\lambda=0.5$).
    As hypothesized, RCCR is detrimental to task performance (AUPRC and MCC decrease), as it incorrectly forces the model towards a symmetric solution.
    The consistency metrics (RC\_Corr and SFR) confirm the regularizer is working as intended, successfully reducing the model's orientation dependence.
}
\label{tab:strand_classification}
\centering
\begin{adjustbox}{max width=\textwidth}
\begin{tabular}{l|cc|cc|cc|cc}
\toprule
% Main metric headers with arrows indicating preferred direction
& \multicolumn{2}{c|}{AUPRC $\uparrow$} & \multicolumn{2}{c|}{MCC $\uparrow$} & \multicolumn{2}{c|}{RC\_Corr} & \multicolumn{2}{c}{\SFRRC{}} \\
% Mid-rules to group sub-columns under main metrics
\cmidrule(lr){2-3}\cmidrule(lr){4-5}\cmidrule(lr){6-7}\cmidrule(lr){8-9}
% Sub-column headers
Backbone & Vanilla & RCCR & Vanilla & RCCR & Vanilla & RCCR & Vanilla & RCCR \\
\midrule
% Data row for NT-v2 model
NT\textendash v2
& \best{0.9054} & 0.8930
& \best{0.6349} & 0.6057
& -0.3808 & 0.5467
& 0.6273 & 0.2740 \\

% Data row for HyenaDNA model
HyenaDNA
& \best{0.7810} & 0.7039
& \best{0.4052} & 0.3152
& -0.8649 & -0.6824
& 0.8455 & 0.7537 \\

DNABERT-2
& \best{0.8451} & 0.7101
& \best{0.0.4725} & 0.3898
& -0.7353 & -0.6012
& 0.8091 & 0.4936 \\
\bottomrule
\end{tabular}
\end{adjustbox}
\end{table*}

\section{Conclusion}

In this paper we address a pervasive but under-measured failure mode of DNA language models: sensitivity to input orientation. We introduced \textbf{Reverse--Complement Consistency Regularization (RCCR)}, a simple, head-agnostic fine-tuning recipe that enforces agreement between a model’s prediction on $x$ and the task-aligned prediction on $\mathrm{RC}(x)$ without modifying the backbone. RCCR operates uniformly across sequence-level classification, scalar regression, and position/profile regression via an alignment operator $\Pi$ and a task-appropriate link $\phi$, yielding a single divergence penalty that preserves the original training objective.

To evaluate orientation robustness alongside standard accuracy, we standardized RC-aware metrics spanning decision-level and distributional agreement. Across three heterogeneous backbones (NT-v2, DNABERT-2, HyenaDNA) and three task families (NT classification, LRB bulk RNA, CAGE profiles), RCCR consistently reduced flip/error metrics and increased alignment-consistency while maintaining, and in several cases improving, core task scores.

Compared to other approaches like test-time averaging, RCCR provides an intrinsically robust model that improve the performance of the model with higher interpretability and better alignment. Future work might explore applying consistency principles to other biological symmetries or extending them to generative models for RC-equivariant generative sequence design. By directly encoding a fundamental biological prior, RCCR enables a more explainable and stable process for DNA LMs.

\newpage
\bibliography{iclr2026_conference}
\bibliographystyle{iclr2026_conference}

\newpage
\appendix

\section{Related Work}
\label{app:related_work}

Our work is situated at the intersection of foundation models for genomics, methods for enforcing symmetries in deep learning, and the principle of consistency regularization.

\paragraph{DNA Language Models.}
The success of large language models in NLP has inspired a parallel movement in genomics, leading to the development of foundation models pretrained on vast corpora of DNA sequences~\citep{ji2021dnabert,dallatorre2025nucleotide,nguyen2023hyenadna,zhou2024dnabert2,ma2025hybridnahybridtransformermamba2longrange}. These models learn rich, transferable representations of genomic syntax and have become powerful backbones for a wide array of downstream tasks, including regulatory element prediction, variant effect prediction, and sequence design. However, like their NLP counterparts, these models do not automatically learn the symmetries inherent to their data domain. The reverse-complement (RC) property of DNA is a fundamental biological prior that standard Transformer or convolutional architectures do not inherently encode, leading to the inconsistencies we address in this work.

\paragraph{Handling Reverse-Complement Symmetry in Genomics.}
The challenge of RC symmetry is well-recognized, and several strategies have been developed to address it. These can be broadly categorized into three groups. (1) \textbf{Data Augmentation (RC-Aug)} is the most common approach, where the training set is augmented with reverse-complemented sequences~\citep{shrikumar2017learning,kelley2016basset}. While simple and often effective at improving generalization, it does not explicitly enforce that the model's predictions for a sequence and its RC counterpart should be consistent. (2) \textbf{Test-Time Averaging (TTA)} involves averaging the predictions on a sequence and its reverse complement at inference time~\citep{kelley2016basset,pmlr-v165-zhou22a}. TTA guarantees symmetric outputs but doubles inference cost and does not produce a single, intrinsically robust model. (3) \textbf{Equivariant Architectures} build the symmetry directly into the model's structure, for instance, by sharing weights between filters and their reverse-complement counterparts in the modeling part \citep{schiff2024caduceus}. While principled, this approach can limit architectural flexibility and is not applicable to existing, widely-used pretrained backbones. Also, these equivariant architectures fail to capture all the biological status, where RC equivariant cannot be assumed in tasks like DNA replication and epigenetic modifications~\citep{sepulveda2023mutational,schmidl2023mammalian}.

\paragraph{Consistency Regularization.}
Conceptually, RCCR is a form of consistency regularization, a powerful principle widely used in other machine learning domains, particularly in semi-supervised and self-supervised learning. The core idea is that a model's prediction should be robust to small, semantics-preserving perturbations of its input. For example, in semi-supervised image classification, models are trained to produce consistent predictions for an image and its augmented versions (e.g., rotated or color-jittered)~\citep{xie2020unsupervised}. The Mean Teacher model~\citep{tarvainen2017mean} extends this by enforcing consistency between a student model's predictions and those of a teacher model, which provides more stable targets. RCCR applies this same principle to the genomics domain, treating the reverse-complement operation as a natural, discrete data augmentation. By penalizing the divergence between predictions on $x$ and $\mathrm{RC}(x)$, RCCR directly encourages the model to learn a function that is invariant (or equivariant) to this fundamental biological transformation.

\section{Theory: core results and concise proofs}
\label{app:theory}

\theoremstyle{plain}
\newtheorem{theoremA}{Theorem}[section]
\newtheorem{lemmaA}[theoremA]{Lemma}
\theoremstyle{remark}
\newtheorem{remarkA}{Remark}[section]

\paragraph{Setup.}
Let $\tau:X\!\to\!X$ be the reverse–complement map with $\tau^2=\mathrm{id}$ and let $\Pi:\mathcal O\!\to\!\mathcal O$ be the task-aware alignment with $\Pi^2=\mathrm{id}$.
For a predictor $f:X\!\to\!\mathcal O$ define the aligned RC output $\tilde f(x)=\Pi f(\tau(x))$ and the \emph{symmetrizer}
\begin{equation}
\label{eq:symop_app}
\big(Sf\big)(x)\;=\;\tfrac12\Big(f(x)+\Pi f(\tau(x))\Big).
\end{equation}
Let $\phi:\mathcal O\!\to\!\mathcal Z$ be a link and $D:\mathcal Z\times\mathcal Z\!\to\![0,\infty)$ a divergence with $D(u,v)=0\iff u=v$; let $\ell(y,o)$ be convex in $o$.
The RCCR objective (~\eqref{eq:rccr} in the main text) is
\[
\mathcal{L}_{\mathrm{RCCR}}(f)
=\mathbb{E}\big[\ell(Y,f(X))\big]
+\lambda\,\mathbb{E}\big[D\!\big(\phi(f(X)),\,\phi(\tilde f(X))\big)\big],\qquad \lambda\ge 0.
\]
We assume \emph{RC-closure} of the data: $(X,Y)\stackrel{d}{=}(\tau(X),Y)$ with labels expressed in the frame of $X$.

\begin{theoremA}[Symmetrization is risk non-increasing]
\label{thm:sym_noninc}
For any predictor $f$ and any $\lambda\ge 0$,
\[
\mathcal L_{\mathrm{RCCR}}(Sf)\ \le\ \mathcal L_{\mathrm{RCCR}}(f).
\]
\end{theoremA}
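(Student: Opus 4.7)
My plan is to split $\mathcal L_{\mathrm{RCCR}}(Sf)$ into the task loss and the consistency penalty, show that the penalty vanishes exactly at $Sf$, and then bound the task loss at $Sf$ by that at $f$ using convexity of $\ell$ together with the RC-closure assumption. The entire bound then collapses onto $\mathbb E[\ell(Y,f(X))]$, which is itself dominated by $\mathcal L_{\mathrm{RCCR}}(f)$ because the penalty is non-negative and $\lambda\ge 0$.

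First, I would use the identity $\Pi(Sf)(\tau(x))=(Sf)(x)$ verified immediately after (\ref{eq:symop_app}), whose proof needs only $\tau^{2}=\Pi^{2}=\mathrm{id}$ together with the (assumed) linearity of $\Pi$ on $\mathcal O$. Applying $\phi$ to both sides gives $\phi((Sf)(x))=\phi(\Pi(Sf)(\tau(x)))$, so the consistency integrand $D(\phi((Sf)(x)),\phi(\Pi(Sf)(\tau(x))))$ is identically zero by the non-degeneracy assumption $D(u,v)=0\iff u=v$, and the $\lambda$-term in $\mathcal L_{\mathrm{RCCR}}(Sf)$ vanishes.

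Next, invoking standing assumption (i) that $o\mapsto\ell(y,o)$ is convex, Jensen's inequality applied pointwise to definition (\ref{eq:symop_app}) yields
\[
\ell(Y,(Sf)(X))\ \le\ \tfrac12\,\ell(Y,f(X))\,+\,\tfrac12\,\ell\!\bigl(Y,\Pi f(\tau(X))\bigr).
\]
Taking expectations and invoking RC-closure $(X,Y)\stackrel{d}{=}(\tau(X),Y)$, with labels expressed in the frame of $X$ so that the loss is invariant under the joint action of $\Pi$ on label and prediction, the second summand on the right equals $\mathbb E[\ell(Y,f(X))]$. Combining with the vanishing penalty and $\lambda\,\mathbb E[D(\cdot)]\ge 0$ on the $f$ side then gives
\[
\mathcal L_{\mathrm{RCCR}}(Sf)=\mathbb E[\ell(Y,(Sf)(X))]\ \le\ \mathbb E[\ell(Y,f(X))]\ \le\ \mathcal L_{\mathrm{RCCR}}(f).
\]

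The step deserving the most care is the distributional identity $\mathbb E[\ell(Y,\Pi f(\tau(X)))]=\mathbb E[\ell(Y,f(X))]$. For sequence-level classification $\Pi=\mathrm{id}$ and it follows from $X\stackrel{d}{=}\tau(X)$ alone, but for bin-wise or profile heads one has to check that labels in the frame of $X$ together with the $\Pi$-equivariance of the loss make evaluation of the reflected predictor $\Pi f\circ\tau$ against the original labels statistically indistinguishable from evaluation of $f$. Each head instantiation in Sec.~\ref{sec:method:instantiations} satisfies this by construction (squared error on stabilized profiles and symmetrized Poisson KL commute with the bin-reversal and channel-permutation action of $\Pi$), so the theorem falls out uniformly across task types.
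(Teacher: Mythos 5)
Your proof is correct, and it handles the consistency penalty by a genuinely different (and cleaner) route than the paper. Both arguments share the same core step --- Jensen's inequality applied to the convex task loss, followed by RC-closure to identify $\mathbb{E}[\ell(Y,\Pi f(\tau(X)))]$ with $\mathbb{E}[\ell(Y,f(X))]$ --- and you rightly flag that this identification is the delicate point for profile heads, where it requires the joint $\Pi$-equivariance of the loss and of the label frame rather than $X\stackrel{d}{=}\tau(X)$ alone; the paper's proof compresses this into the phrase ``using RC-closure plus the involutive properties.'' Where you diverge is the penalty term: the paper bounds $D\big(\phi(Sf(X)),\phi(\Pi(Sf)(\tau(X)))\big)$ by a further convexity/Jensen step against the average of the two cross-divergences evaluated at $f$, whereas you observe that the exact identity $\Pi(Sf)(\tau(x))=(Sf)(x)$ (which, as you note, uses linearity of $\Pi$) forces this term to vanish identically by non-degeneracy of $D$. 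Your route is sharper and leaner: it does not invoke assumption (iii) (convexity of $o\mapsto D(\phi(o),v)$) at all for this theorem, and it delivers the strictly stronger conclusion $\mathcal{L}_{\mathrm{RCCR}}(Sf)\le\mathbb{E}[\ell(Y,f(X))]\le\mathcal{L}_{\mathrm{RCCR}}(f)$, i.e., the symmetrized predictor is dominated by the task-only risk of $f$, not merely by its full RCCR objective. Given the same identity, the paper's extra Jensen step on $D$ is in fact redundant (its left-hand side is already zero), so nothing is lost --- and a little is gained --- by your simplification.
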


\begin{proof}
By convexity and Jensen,
\[
\ell\!\big(Y,Sf(X)\big)\ \le\ \tfrac12\,\ell\!\big(Y,f(X)\big)+\tfrac12\,\ell\!\big(Y,\Pi f(\tau(X))\big).
\]
Similarly, convexity of $D\!\circ\!\phi$ in its first argument gives
\[
D\!\big(\phi(Sf(X)),\phi(\Pi(Sf)(\tau(X)))\big)
\le \tfrac12\,D\!\big(\phi(f(X)),\phi(\tilde f(X))\big)\ +\ \tfrac12\,D\!\big(\phi(\tilde f(X)),\phi(f(X))\big).
\]
Taking expectations and using RC-closure plus the involutive properties of $\tau$ and $\Pi$ shows the two right-hand terms have equal expectations; summing yields the claim.
\end{proof}

\begin{theoremA}[Equivariant minimizers under RC-symmetric labels]
\label{thm:minimizers_equivariant}
Assume $\ell(y,o)$ is strictly convex in $o$ (for each $y$) and $D\!\circ\!\phi$ is convex.
Let $f^\star$ be any global minimizer of $\mathcal L_{\mathrm{RCCR}}$ for some $\lambda\ge 0$.
Then $f^\star$ is RC-consistent after alignment:
\[
\phi\big(f^\star(x)\big)\ =\ \phi\big(\Pi f^\star(\tau(x))\big)\quad\text{for a.e. }x.
\]
\end{theoremA}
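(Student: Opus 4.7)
The plan is a sandwich argument: Theorem~\ref{thm:sym_noninc} applied to a global minimizer $f^\star$ gives $\mathcal{L}_{\mathrm{RCCR}}(Sf^\star)\le\mathcal{L}_{\mathrm{RCCR}}(f^\star)$, and global optimality supplies the reverse inequality, so equality is forced. I would then push this equality back into the individual steps of the proof of Theorem~\ref{thm:sym_noninc} and argue that the accumulated slacks must vanish, which delivers the claim.

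First, the involutive identity $\Pi(Sf^\star)(\tau(x))=(Sf^\star)(x)$ (established just before the Proposition in the main text) makes $Sf^\star$ automatically RC-consistent, so its penalty is zero and $\mathcal{L}_{\mathrm{RCCR}}(Sf^\star)=\mathbb{E}[\ell(Y,Sf^\star(X))]$. Jensen applied to the convex map $o\mapsto\ell(Y,o)$, combined with RC-closure (via $\tau^2=\mathrm{id}$, together with equivariance of $\ell$ under the simultaneous $\Pi$-action on $(y,o)$ in the nontrivial-$\Pi$ case), yields $\mathbb{E}[\ell(Y,Sf^\star(X))]\le\mathbb{E}[\ell(Y,f^\star(X))]$. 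Chaining with the sandwich equality collapses the full expression to
\[
\lambda\,\mathbb{E}\!\left[D\!\big(\phi(f^\star(X)),\phi(\tilde f^\star(X))\big)\right]\ \le\ 0.
\]
For $\lambda>0$, non-negativity of $D$ together with $D(u,v)=0\iff u=v$ force $\phi(f^\star(X))=\phi(\tilde f^\star(X))$ a.e., which is the claim.

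The main obstacle lives in the edge case $\lambda=0$: the penalty vanishes and the sandwich only asserts an equality of integrated losses, so I must lift an expectation-level equality to a pointwise one. I would apply the standard argument that a non-negative integrand with zero mean vanishes a.e.\ to the pointwise Jensen slack $\tfrac12\ell(Y,f^\star(X))+\tfrac12\ell(Y,\Pi f^\star(\tau(X)))-\ell(Y,Sf^\star(X))\ge 0$, then invoke strict convexity of $\ell$ in its prediction argument to deduce $f^\star(X)=\Pi f^\star(\tau(X))$ a.e.; applying $\phi$ gives the $\phi$-level equality. A secondary subtlety is the precise reading of RC-closure when $\Pi\ne\mathrm{id}$, which I would handle by positing that $\ell$ is equivariant under the joint $\Pi$-action on $(y,o)$, a condition satisfied by every head instantiated in Sec.~\ref{sec:method:instantiations}. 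The final addendum follows immediately: the $\lambda=0$ case places the unique task-only minimizer on the zero-penalty locus, so it continues to minimize the full RCCR objective for every $\lambda\ge 0$.
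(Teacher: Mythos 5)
Your proposal is correct and follows essentially the same route as the paper: apply Theorem~\ref{thm:sym_noninc} to $f^\star$, use global optimality to force equality in the sandwich, and then extract pointwise agreement from the equality conditions (definiteness of $D$ when $\lambda>0$, equality in Jensen under strict convexity of $\ell$ when $\lambda=0$). Your explicit case split and your flagging of the need for $\ell$ to be equivariant under the joint $\Pi$-action on $(y,o)$ are somewhat more careful than the paper's one-line conclusion, but the underlying argument is identical.
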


\begin{proof}
By Thm.~\ref{thm:sym_noninc}, $\mathcal L_{\mathrm{RCCR}}(Sf^\star)\le \mathcal L_{\mathrm{RCCR}}(f^\star)$.
Since $f^\star$ is globally optimal, equality holds.
Equality in Jensen under strict convexity forces the two averaged arguments to coincide almost everywhere, i.e., $f^\star(x)=\Pi f^\star(\tau(x))$ at the level where $\ell$ acts; applying $\phi$ yields the statement.
\end{proof}

\begin{remarkA}[Two immediate consequences]
\label{rem:tta_aug}
\emph{(i) TTA is the symmetrizer.} Inference-time averaging computes $Sf$ in \eqref{eq:symop_app}; by Thm.~\ref{thm:sym_noninc} it cannot increase $\mathcal L_{\mathrm{RCCR}}$ but it leaves training-time predictions unchanged.
\emph{(ii) RC-Aug does not enforce agreement.} Training on $x$ or $\tau(x)$ with the same label minimizes $\mathbb E[\ell(Y,f(X))]$ (by RC-closure) and can converge to orientation-sensitive $f$ unless a consistency term is added.
\end{remarkA}

\begin{theoremA}[Classification penalty: JS control and local quadraticity]
\label{thm:skl_js_quad}
Let $p=\mathrm{softmax}(z/T)$ and $q=\mathrm{softmax}(\tilde z/T)$ with $T>0$.
Then:
\begin{enumerate}\itemsep0.25em
\item \textbf{JS is controlled by SKL:} with $m=(p+q)/2$,
\[
\mathrm{JS}(p,q)\ \coloneqq\ \tfrac12\mathrm{KL}(p\|m)+\tfrac12\mathrm{KL}(q\|m)\ \le\ \tfrac12\big(\mathrm{KL}(p\|q)+\mathrm{KL}(q\|p)\big)=\tfrac12\,\mathrm{SKL}(p,q).
\]
\item \textbf{Locally quadratic in logits:} let $\Delta=\tilde z-z$ and $\Delta_\perp=\Delta-\tfrac{1}{C}(\mathbf 1^\top\Delta)\mathbf 1$ (to remove softmax’s shift invariance).
With the Fisher matrix $F(p)=\mathrm{Diag}(p)-pp^\top$,
\[
\mathrm{SKL}\big(p(z),p(z{+}\Delta)\big)\ =\ \frac{1}{T^2}\,\Delta_\perp^\top F(p)\,\Delta_\perp\ +\ O(\|\Delta\|^3).
\]
Hence there exist $0<c_1\le c_2$ (depending smoothly on $p$) such that, for small $\|\Delta\|$,
\[
\frac{c_1}{T^2}\,\|\Delta_\perp\|_2^2\ \le\ \mathrm{SKL}\big(p(z),p(z{+}\Delta)\big)\ \le\ \frac{c_2}{T^2}\,\|\Delta_\perp\|_2^2.
\]
\end{enumerate}
\end{theoremA}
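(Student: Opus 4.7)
The proof splits along the two parts of the statement. For part (1), the plan is to invoke joint convexity of $\mathrm{KL}$ in its second argument. Writing $m = (p+q)/2$, convexity of $q'\mapsto\mathrm{KL}(p\|q')$ evaluated at the convex combination $m = \tfrac12 p + \tfrac12 q$ gives $\mathrm{KL}(p\|m)\leq \tfrac12\mathrm{KL}(p\|p) + \tfrac12\mathrm{KL}(p\|q) = \tfrac12\mathrm{KL}(p\|q)$, and symmetrically $\mathrm{KL}(q\|m)\leq \tfrac12\mathrm{KL}(q\|p)$. Halving and summing yields $\mathrm{JS}(p,q)\leq \tfrac14\,\mathrm{SKL}(p,q)$, which is strictly stronger than (and implies) the stated $\tfrac12$-bound. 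This step is mechanical once convexity is in hand.

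For part (2), the core tool is the Bregman form of $\mathrm{KL}$ between softmax distributions. Let $\psi(z)=\log\sum_i\exp(z_i/T)$ be the temperature-scaled log-partition function. A direct computation gives $\nabla\psi(z)=p(z)/T$ and the exponential-family identity
\[
\mathrm{KL}\bigl(p(z_0)\,\|\,p(z)\bigr)=\psi(z)-\psi(z_0)-\nabla\psi(z_0)^{\top}(z-z_0).
\]
Taylor-expanding $\psi(z+\Delta)$ to third order around $z$ and substituting yields $\mathrm{KL}(p(z)\|p(z+\Delta)) = \tfrac12\,\Delta^{\top}\nabla^2\psi(z)\,\Delta + O(\|\Delta\|^3)$. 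The analogous expansion for $\mathrm{KL}(p(z+\Delta)\|p(z))$ gives the same leading quadratic form: both KL divergences agree with the Fisher metric at second order, differing only in cubic-and-higher terms. Computing $\nabla^2\psi(z) = \tfrac{1}{T^2}\bigl(\mathrm{Diag}(p)-pp^{\top}\bigr) = \tfrac{1}{T^2}F(p)$ and summing the two expansions gives $\mathrm{SKL}(p(z),p(z+\Delta)) = \tfrac{1}{T^2}\Delta^{\top} F(p)\Delta + O(\|\Delta\|^3)$, as claimed.

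To finish, I would handle shift invariance and the two-sided bound together. Softmax satisfies $p(z)=p(z+c\mathbf{1})$ for any scalar $c$, equivalently $F(p)\mathbf{1}=p-p(\mathbf{1}^{\top}p)=0$; decomposing $\Delta=\Delta_\perp+c\mathbf{1}$ therefore collapses $\Delta^{\top}F(p)\Delta$ to $\Delta_\perp^{\top}F(p)\Delta_\perp$. On the open simplex, $F(p)$ restricted to $\mathbf{1}^{\perp}$ is positive definite; its smallest and largest eigenvalues on that subspace, $c_1(p), c_2(p)>0$, depend smoothly on $p$ and sandwich the quadratic form via $c_1\|\Delta_\perp\|_2^2\leq \Delta_\perp^{\top}F(p)\Delta_\perp\leq c_2\|\Delta_\perp\|_2^2$. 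For $\|\Delta\|$ small relative to these constants, the $O(\|\Delta\|^3)$ remainder can be absorbed by slightly shrinking $c_1$ and enlarging $c_2$. The main (modest) technical care is tracking $T$-factors through $\nabla\psi$ and $\nabla^2\psi$, and uniformly dominating the cubic remainder on a fixed neighborhood of $\Delta=0$; both are routine smooth-function Taylor analysis and pose no real obstacle.
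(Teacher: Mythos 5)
Your proof is correct and follows essentially the same route as the paper: part (1) via convexity of $\mathrm{KL}$ in its second argument (which is exactly what the paper's log-sum-inequality step establishes, and which in fact yields the sharper bound $\mathrm{JS}\le\tfrac14\,\mathrm{SKL}$, implying the stated one), and part (2) via a second-order expansion producing the Fisher quadratic form, which you organize cleanly through the log-partition/Bregman identity $\mathrm{KL}(p(z_0)\|p(z))=\psi(z)-\psi(z_0)-\nabla\psi(z_0)^\top(z-z_0)$. The only detail worth making explicit when absorbing the remainder into the two-sided bound is that, by softmax shift-invariance, one may replace $\Delta$ by $\Delta_\perp$ throughout so the remainder is genuinely $O(\|\Delta_\perp\|^3)$; otherwise a $\Delta$ with a large component along $\mathbf{1}$ but tiny $\Delta_\perp$ would defeat the absorption argument.
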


\begin{proof}
(1) Use the log-sum inequality on each KL term and average; standard.
(2) Write $q=p+\mathrm d p$ with $\mathrm d z=\Delta/T$.
A second-order expansion of $\mathrm{KL}(p\|q)+\mathrm{KL}(q\|p)$ around $p$ yields the quadratic form $\mathrm d z^\top F(p)\,\mathrm d z$ (symmetrization doubles the Fisher term), while the softmax null direction is removed by centering to $\Delta_\perp$.
Remainders are $O(\|\Delta\|^3)$.
\end{proof}

\begin{remarkA}[Masked/strand-specific channels]
\label{rem:mask}
If only a subset of outputs is RC-invariant, apply a diagonal mask $M$ after $\phi$ and replace $D(\phi(f),\phi(\tilde f))$ by $D(M\phi(f),\,M\phi(\tilde f))$.
All results above hold verbatim with $M\phi(\cdot)$ in place of $\phi(\cdot)$.
\end{remarkA}

Collectively, our results show that RCCR enforces reverse–complement (RC) agreement without sacrificing task risk under RC-symmetric labels. First, symmetrization (the test-time average of a prediction and its aligned RC counterpart) is a projection onto the RC-consistent subspace and provably cannot increase the RCCR objective; this formalizes why ensembling over orientations helps. Second, when labels are RC-symmetric and the task loss is strictly convex in the prediction, any global minimizer of the RCCR objective is RC-consistent after alignment, so encouraging consistency does not trade off accuracy at optimum. Third, in classification the symmetric KL penalty used by RCCR both controls Jensen–Shannon divergence and behaves as a Fisher-weighted quadratic in logit space near agreement, yielding stable, informative gradients. These properties extend to practical settings with strand-specific channels via simple masking. Conceptually, RCCR subsumes the robustness of test-time averaging, and it remedies the limitation of RC data augmentation, which preserves distributional symmetry but does not by itself penalize orientation disagreement.

\section{Experimental Details}
\label{app:exp_details}

\paragraph{General Settings}
Across all experiments, models were fine-tuned using the AdamW optimizer~\citep{kingma2014adam} with $\beta_1=0.9$, $\beta_2=0.999$, and a weight decay of $0.01$. We used a linear learning rate schedule with a warmup period corresponding to 6\% of total training steps. All final runs were conducted with a deterministic seed of 2025 for reproducibility. Where applicable, bf16 mixed-precision was used to accelerate training on compatible GPUs. For all \textbf{RCCR} variants, the symmetric KL divergence was calculated with a fixed temperature of $T=2.0$. We use a single 80GB NVIDIA H100 for all the finetuning experiments.

\subsection{Sequence-Level Classification (NT Benchmark)}

\paragraph{Data}
We used the official 80\%/10\%/10\% training, validation, and test splits from the Nucleotide Transformers benchmark~\citep{dallatorre2025nucleotide}. The 18 tasks in this suite use input sequences of varying lengths, including 1,000~bp for most tasks, 600~bp for splice site prediction, 400~bp for enhancers, and 300~bp for promoters.

\begin{table}[htbp]
\centering
\caption{Backbone-specific hyperparameters for the NT Classification Suite.}
\label{tab:hyperparams_nt}
\begin{tabular}{@{}lccc@{}}
\toprule
\textbf{Hyperparameter} & \textbf{NT-v2} & \textbf{HyenaDNA} & \textbf{DNABERT-2} \\ \midrule
Learning Rate & $2 \times 10^{-4}$ & $2 \times 10^{-4}$ & $2 \times 10^{-4}$ \\
Global Batch Size & 256 & 256 & 256 \\
\bottomrule
\end{tabular}
\end{table}

\subsection{Sequence-Level Regression (Bulk RNA)}
\paragraph{Data}
The Bulk RNA regression task uses data from the Genomics Long-Range Benchmark (LRB)~\citep{trop2024lrb}, consisting of 4,096~bp sequences for predicting expression levels across 218 cell types. We used the official data splits provided by the benchmark.

\begin{table}[htbp]
\centering
\caption{Backbone-specific hyperparameters for the Bulk RNA Regression task.}
\label{tab:hyperparams_bulkrna}
\begin{tabular}{@{}lccc@{}}
\toprule
\textbf{Hyperparameter} & \textbf{NT-v2} & \textbf{HyenaDNA} & \textbf{DNABERT-2} \\ \midrule
Learning Rate & $2\times10^{-4}$& $2\times10^{-4}$& $2\times10^{-4}$\\
Epochs& 3 & 3 & 4 \\
Global Batch Size & 32& 32& 16\\
$\lambda$ & 0.3& 0.3& 0.1\\ \bottomrule
\end{tabular}
\end{table}

\subsection{Bin-wise Regression (CAGE)}
\paragraph{Data}
The CAGE profile prediction task uses data from the LRB~\citep{trop2024lrb}, which consists of 4,096~bp windows for predicting transcriptional activity across 128-bp bins. Target values were transformed using a $log(1+x)$ function for training.

\begin{table}[htbp]
\centering
\caption{Hyperparameters for the CAGE profile prediction task.}
\label{tab:hyperparams_cage}
\begin{tabular}{@{}lcc@{}}
\toprule
\textbf{Hyperparameter} & \textbf{NT-v2} & \textbf{HyenaDNA}  \\ \midrule
Learning Rate & $5\times10^{-5}$& $5\times10^{-5}$\\
Epochs& 3 & 3 \\
Batch Size & 32& 32\\
$\lambda$ & 0.3& 0.3\\ \bottomrule
\end{tabular}
\end{table}

\subsection{Negative Control (Strand Classification)}
\paragraph{Data}
The strand classification dataset was generated by extracting 1,024~bp sequences centered on transcription start sites (TSS) of protein-coding genes from the GENCODE v49 annotations~\citep{frankish2021gencode} on the hg38 human reference genome. Sequences with more than 20\% ambiguous bases were removed, and the dataset was partitioned to ensure no reverse-complement overlaps between the training, validation, and test sets.

\begin{table}[h]
\centering
\caption{Hyperparameters for the Strand Classification control task.}
\label{tab:hyperparams_strand}
\begin{tabular}{@{}lcc@{}}
\toprule
\textbf{Hyperparameter} & \textbf{NT-v2} & \textbf{HyenaDNA} \\ \midrule
Learning Rate & $2\times10^{-4}$& $2\times10^{-4}$\\
Epochs& 2 & 2 \\
Global Batch Size & 64& 64\\
RCCR $\lambda$ Value& 0.5  & 0.5 \\ \bottomrule
\end{tabular}
\end{table}

\section{Ablation Study for Regularization Strength $\lambda$}
\label{app:ablation}

To assess the impact of the regularization strength hyperparameter $\lambda$, we conducted an ablation study on two representative tasks: a sequence-level classification task (predicting H3K27ac histone marks from the NT benchmark) and a bin-wise regression task (CAGE profile prediction). We swept $\lambda$ across a range of values while keeping all other model and training hyperparameters fixed. The results, summarized in Figure~\ref{fig:lambda_ablation}, demonstrate that while the optimal $\lambda$ is task-dependent, moderate values consistently provide a beneficial trade-off between task performance and RC consistency.

\begin{figure}[ht]
    \centering
    % NOTE: You will need to create these plots from your CSV data.
    % The captions and analysis below are written to match the data you provided.
    % I am using placeholder images here.
    \includegraphics[width=0.48\textwidth]{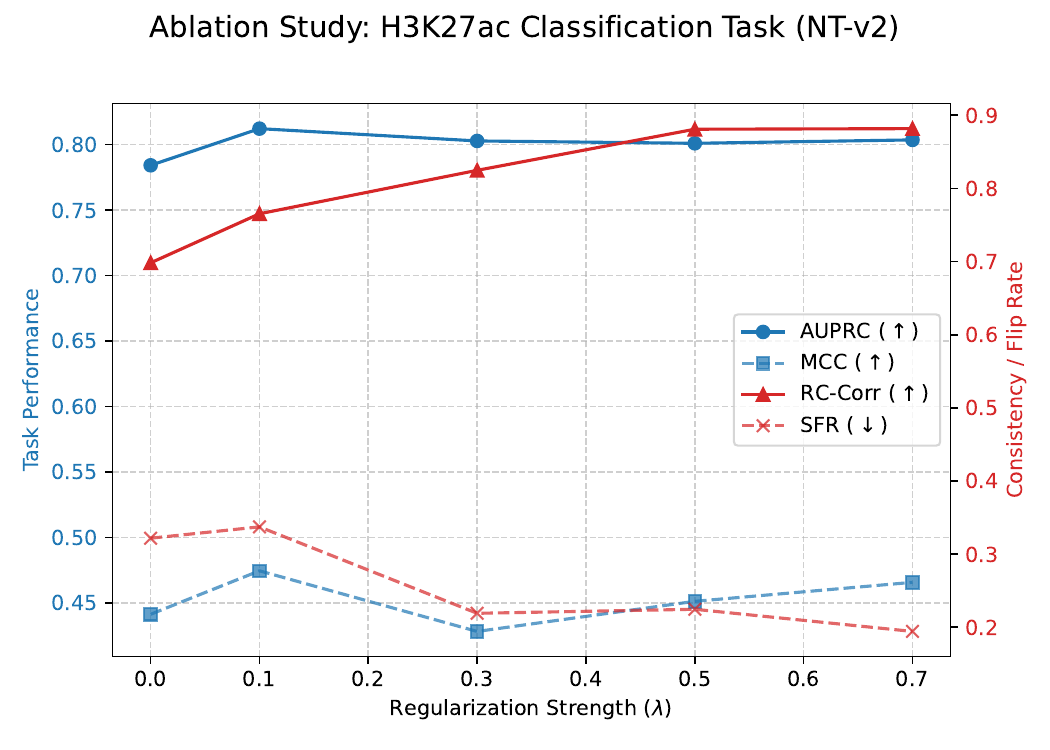}
    \includegraphics[width=0.48\textwidth]{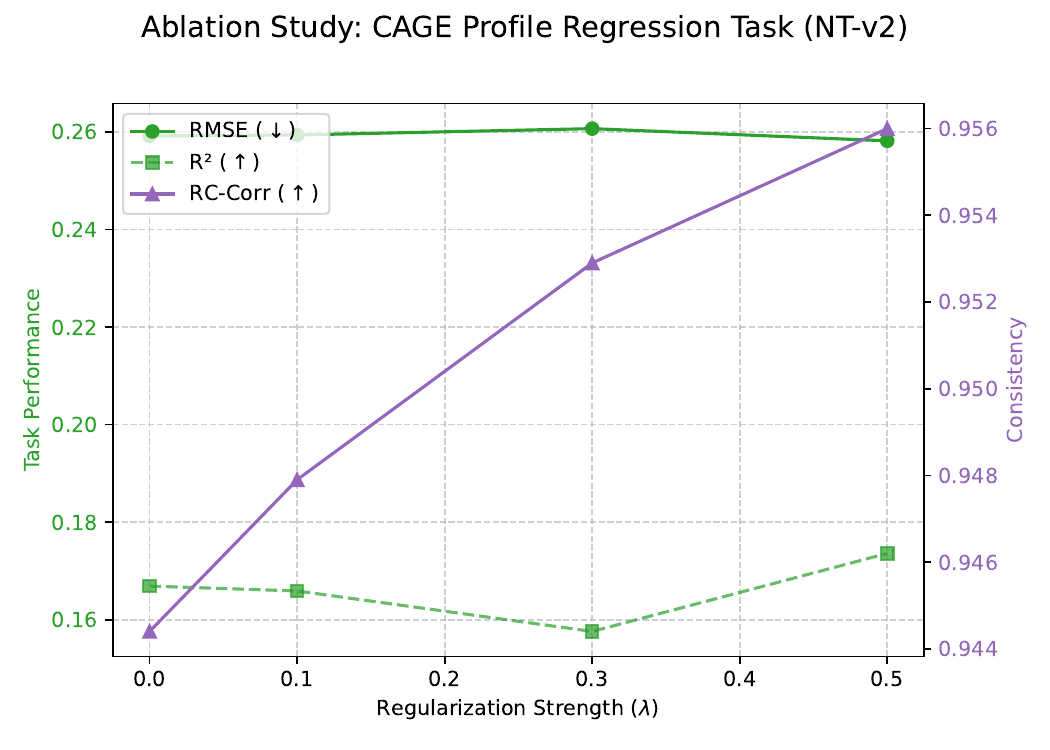}
    \caption{
        Ablation study of the RCCR regularization strength $\lambda$ on the NT-v2 backbone.
        \textbf{(Left)} For H3K27ac classification, a small $\lambda=\mathbf{0.1}$ provides the best balance, improving AUPRC and MCC. Larger values enforce stronger RC consistency (higher \RCcorr, lower \SFRRC) but begin to degrade task performance and calibration (ECE).
        \textbf{(Right)} For CAGE profile regression, a larger $\lambda=\mathbf{0.5}$ is optimal, simultaneously improving both task performance (lower RMSE, higher $R^2$) and RC consistency (lower \RCMSE, higher \RCcorr).
    }
    \label{fig:lambda_ablation}
\end{figure}

\paragraph{Sequence-Level Classification (H3K27ac).}
For the H3K27ac task, we observed a distinct trade-off. A small regularization strength of $\boldsymbol{\lambda=0.1}$ proved to be the sweet spot, improving the primary task metrics (AUPRC from 0.7842 to \best{0.8122}; MCC from 0.4414 to \best{0.4746}) and model calibration (ECE from 0.0330 to 0.0301). As $\lambda$ was increased further, we saw diminishing returns on task accuracy. However, these larger values successfully enforced stronger RC consistency, dramatically improving \RCcorr (from 0.6983 to 0.8818 at $\lambda=0.7$) and reducing the flip rate \SFRRC (from 0.3218 to 0.1943). This confirms that $\lambda$ effectively controls the balance between task-specific learning and the enforcement of the RC symmetry prior.

\paragraph{Bin-wise Regression (CAGE).}
The CAGE profile regression task showed a different but equally positive trend. Here, the regularization provided a mutual benefit to both performance and consistency. The optimal value was found at $\boldsymbol{\lambda=0.5}$, which achieved the \best{lowest RMSE (0.2582)}, the \best{highest $R^2$ (0.1736)}, and the \best{highest \RCcorr (0.9560)} simultaneously. This demonstrates that for some tasks, particularly those involving complex profile predictions, enforcing the RC-equivariance prior directly helps the model learn a more accurate and generalizable representation, leading to improvements across all evaluation criteria.

Based on this analysis, we selected small-to-moderate values of $\lambda$ (typically in the range of 0.1 to 0.5) for the experiments in the main paper, chosen to optimize for the primary task metrics on a per-task basis.

\paragraph{Heuristic Guidance for Selecting $\lambda$.}
Our ablation study reveals that the optimal $\lambda$ is task-dependent, but also suggests a general heuristic for its selection. We identify two distinct regimes based on our results:
\begin{itemize}
    \item \textbf{Trade-off Regime (e.g., H3K27ac Classification):} For tasks where the model already achieves strong baseline performance, we observe a trade-off. A small $\lambda$ (e.g., 0.1) is often sufficient to significantly boost task metrics (AUPRC, MCC) while moderately improving RC consistency. Increasing $\lambda$ further primarily enhances consistency (higher RC\_Corr, lower SFR) but may lead to diminishing returns or a slight degradation in task performance. In this scenario, $\lambda$ acts as a fine-tuning knob to balance accuracy and robustness according to the user's priority.

    \item \textbf{Synergistic Regime (e.g., CAGE Profile Regression):} For more complex tasks, such as the CAGE profile prediction, we find that enforcing the RC-equivariance prior acts as a powerful regularizer. A larger $\lambda$ (e.g., 0.5) can simultaneously improve both task performance (lower RMSE, higher R\textsuperscript{2}) and RC consistency metrics. This suggests that for challenging problems, a stronger enforcement of the biological prior helps the model learn a more accurate and generalizable representation.
\end{itemize}
Based on this analysis, we propose a practical approach for hyperparameter tuning: start with a moderate $\lambda$ (e.g., 0.1--0.3) and observe the trends on a validation set. If both task performance and consistency improve, $\lambda$ can be further increased. If a trade-off appears, the optimal value can be found within that smaller range based on the desired balance.

\section{Per-Task Results for the NT Benchmark}
\label{app:nt_full_results}

This section provides the detailed, per-task results for all 18 tasks in the Nucleotide Transformer benchmark across the four primary metrics. These are the results that are averaged and presented in Table~1 of the main text. For each task, the best-performing method is highlighted in \textbf{bold} and the second-best is \uline{underlined}.

\begin{table*}[htbp]
\centering
\caption{Full per-task AUPRC (↑) on the 18 NT benchmark tasks.}
\label{tab:nt_full_auprc}
\begin{adjustbox}{max width=\textwidth}
\begin{tabular}{@{}l|ccc|ccc|ccc@{}}
\toprule
& \multicolumn{3}{c|}{\textbf{DNABERT-2}} & \multicolumn{3}{c|}{\textbf{HyenaDNA}} & \multicolumn{3}{c}{\textbf{NT-v2}} \\
\cmidrule(lr){2-4} \cmidrule(lr){5-7} \cmidrule(lr){8-10}
\textbf{Task} & RCCR & RC-Aug & TTA & RCCR & RC-Aug & TTA & RCCR & RC-Aug & TTA \\
\midrule
\multicolumn{10}{@{}l}{\textit{Histone Modifications}} \\
H2AFZ      & \second{0.8107} & \best{0.8144} & 0.7893 & \second{0.7780} & \best{0.7879} & 0.7440 & \second{0.7829} & \best{0.7973} & 0.7596 \\
H3K27ac    & \best{0.8303} & 0.7761 & \second{0.8158} & \second{0.7746} & \best{0.7798} & 0.7470 & \best{0.8036} & \second{0.7842} & 0.7465 \\
H3K27me3   & \best{0.8409} & \second{0.8251} & 0.8228 & \best{0.7872} & 0.7426 & \second{0.7780} & \best{0.8100} & \second{0.8017} & 0.7640 \\
H3K36me3   & \best{0.8475} & 0.8098 & \second{0.8217} & \best{0.7634} & 0.7368 & \second{0.7424} & \best{0.8347} & 0.7843 & \second{0.7980} \\
H3K4me1    & \best{0.7957} & \second{0.7512} & 0.7061 & \best{0.7337} & \second{0.7303} & 0.7084 & \best{0.7631} & \second{0.7591} & 0.7214 \\
H3K4me2    & 0.5144 & 0.8050 & \best{0.8227} & \best{0.7830} & 0.7448 & \second{0.7680} & \best{0.8373} & 0.7814 & \second{0.7988} \\
H3K4me3    & \best{0.8942} & 0.8479 & \second{0.8502} & \second{0.8527} & 0.8138 & \best{0.8668} & \best{0.8875} & 0.8346 & \second{0.8823} \\
H3K9ac     & \best{0.8431} & \second{0.8229} & 0.8122 & \second{0.7952} & \best{0.8060} & 0.7949 & \best{0.8121} & 0.7511 & \second{0.7871} \\
H3K9me3    & \best{0.7892} & \second{0.7704} & 0.7329 & \second{0.7141} & \best{0.7564} & 0.7072 & \best{0.7554} & \second{0.7551} & 0.7104 \\
H4K20me1   & \best{0.8564} & 0.8171 & \second{0.8463} & \second{0.7880} & 0.7718 & \best{0.8133} & \best{0.8377} & 0.7913 & \second{0.8217} \\
\midrule
\multicolumn{10}{@{}l}{\textit{Promoters}} \\
Promoter (All)     & \best{0.9504} & \second{0.9497} & 0.9447 & \second{0.9303} & \best{0.9398} & 0.9169 & \second{0.9393} & 0.9373 & \best{0.9409} \\
Promoter (No TATA) & \best{0.9536} & 0.9444 & \second{0.9496} & \best{0.9498} & \second{0.9471} & 0.9367 & \best{0.9539} & 0.9322 & \second{0.9486} \\
Promoter (TATA)    & \best{0.9775} & \second{0.9649} & 0.9577 & \best{0.9384} & 0.8971 & \second{0.9215} & \best{0.9499} & 0.9415 & \second{0.9485} \\
\midrule
\multicolumn{10}{@{}l}{\textit{Enhancers}} \\
Enhancers       & \best{0.7929} & 0.7662 & \second{0.7828} & \best{0.7509} & \second{0.7359} & 0.7273 & \best{0.7844} & 0.7423 & \second{0.7610} \\
Enhancer Types  & \best{0.6050} & \second{0.5898} & 0.5247 & \best{0.5488} & \second{0.5461} & 0.5247 & \best{0.5754} & \second{0.5677} & 0.5247 \\
\midrule
\multicolumn{10}{@{}l}{\textit{Splice Sites}} \\
Acceptors  & \best{0.9770} & 0.9396 & \second{0.9663} & \second{0.7902} & 0.7627 & \best{0.8292} & \best{0.9955} & \second{0.9950} & 0.9942 \\
All        & 0.9625 & 0.9503 & \best{0.9967} & 0.5621 & 0.6240 & \best{0.9967} & \second{0.9933} & 0.9880 & \best{0.9967} \\
Donors     & \best{0.9771} & 0.9451 & \second{0.9734} & \second{0.8004} & 0.7733 & \best{0.8261} & \best{0.9944} & \second{0.9930} & 0.9849 \\
\bottomrule
\end{tabular}
\end{adjustbox}
\end{table*}

\begin{table*}[htbp]
\centering
\caption{Full per-task MCC (↑) on the 18 NT benchmark tasks.}
\label{tab:nt_full_mcc}
\begin{adjustbox}{max width=\textwidth}
\begin{tabular}{@{}l|ccc|ccc|ccc@{}}
\toprule
& \multicolumn{3}{c|}{\textbf{DNABERT-2}} & \multicolumn{3}{c|}{\textbf{HyenaDNA}} & \multicolumn{3}{c}{\textbf{NT-v2}} \\
\cmidrule(lr){2-4} \cmidrule(lr){5-7} \cmidrule(lr){8-10}
\textbf{Task} & RCCR & RC-Aug & TTA & RCCR & RC-Aug & TTA & RCCR & RC-Aug & TTA \\
\midrule
\multicolumn{10}{@{}l}{\textit{Histone Modifications}} \\
H2AFZ & \best{0.5080} & \second{0.4550} & 0.4077 & \second{0.3938} & \best{0.4100} & 0.3766 & \best{0.4565} & \second{0.4263} & 0.3979 \\
H3K27ac & \best{0.5193} & 0.4204 & \second{0.4438} & \best{0.3907} & \second{0.3829} & 0.3605 & \best{0.4490} & 0.3926 & \second{0.3937} \\
H3K27me3 & \best{0.5957} & \second{0.5256} & 0.5035 & \second{0.4956} & \best{0.4961} & 0.4785 & \best{0.5457} & \second{0.5051} & 0.4994 \\
H3K36me3 & \best{0.5887} & \second{0.5573} & 0.4700 & \best{0.4792} & \second{0.4518} & 0.4357 & \best{0.5640} & \second{0.5119} & 0.4776 \\
H3K4me1 & \best{0.4971} & \second{0.3856} & 0.3516 & \best{0.4012} & 0.3357 & \second{0.3599} & \best{0.4545} & \second{0.3842} & 0.3631 \\
H3K4me2 & 0.1699 & \best{0.5150} & \second{0.4798} & \second{0.4201} & \best{0.4387} & 0.3918 & \best{0.5289} & \second{0.4927} & 0.4298 \\
H3K4me3 & \best{0.6212} & 0.5721 & \second{0.6098} & \best{0.5568} & \second{0.5337} & 0.5261 & \best{0.5948} & 0.5603 & \second{0.5939} \\
H3K9ac & \best{0.5589} & \second{0.4867} & 0.4785 & \best{0.4685} & 0.3664 & \second{0.4635} & \best{0.5116} & \second{0.4279} & 0.4273 \\
H3K9me3 & \best{0.4732} & \second{0.3771} & 0.3474 & \best{0.3510} & \second{0.2983} & 0.2660 & \best{0.4470} & \second{0.3223} & 0.2880 \\
H4K20me1 & \best{0.6353} & \second{0.6050} & 0.5742 & \best{0.5540} & \second{0.5323} & 0.5210 & \best{0.5779} & \second{0.5671} & 0.5503 \\
\midrule
\multicolumn{10}{@{}l}{\textit{Promoters}} \\
Promoter (All) & \second{0.7375} & \best{0.7436} & 0.7165 & \best{0.7031} & \second{0.6738} & 0.6519 & \best{0.7083} & 0.6799 & \second{0.7020} \\
Promoter (No TATA) & \best{0.7626} & 0.7385 & \second{0.7398} & \best{0.7510} & \second{0.7175} & 0.6865 & \best{0.7606} & 0.6850 & \second{0.7169} \\
Promoter (TATA) & \best{0.8326} & \second{0.8189} & 0.7683 & \best{0.6713} & 0.5377 & \second{0.6566} & \second{0.7171} & 0.7078 & \best{0.7338} \\
\midrule
\multicolumn{10}{@{}l}{\textit{Enhancers}} \\
Enhancers & \best{0.4999} & \second{0.4911} & 0.4640 & \best{0.4708} & \second{0.4594} & 0.4295 & \best{0.4971} & \second{0.4860} & 0.4747 \\
Enhancer Types & \second{0.4744} & \best{0.4788} & 0.4689 & \best{0.4418} & \second{0.4206} & 0.3885 & \best{0.4775} & \second{0.4400} & 0.4235 \\
\midrule
\multicolumn{10}{@{}l}{\textit{Splice Sites}} \\
Acceptors & \best{0.8435} & 0.7662 & \second{0.7968} & \second{0.4478} & 0.4319 & \best{0.4943} & \best{0.9467} & \second{0.9228} & 0.7765 \\
All & \second{0.8582} & 0.8316 & \best{0.8614} & \second{0.3027} & 0.2887 & \best{0.3229} & \best{0.9590} & \second{0.9349} & 0.9069 \\
Donors & \best{0.8321} & \second{0.8053} & 0.8255 & \second{0.4534} & 0.4473 & \best{0.4939} & \best{0.9653} & \second{0.9413} & 0.8697 \\
\bottomrule
\end{tabular}
\end{adjustbox}
\end{table*}

\begin{table*}[htbp]
\centering
\caption{Full per-task SFR (↓) on the 18 NT benchmark tasks. TTA results are omitted as trivial.}
\label{tab:nt_full_sfr}
\begin{adjustbox}{max width=\textwidth}
\begin{tabular}{@{}l|cc|cc|cc@{}}
\toprule
& \multicolumn{2}{c|}{\textbf{DNABERT-2}} & \multicolumn{2}{c|}{\textbf{HyenaDNA}} & \multicolumn{2}{c}{\textbf{NT-v2}} \\
\cmidrule(lr){2-3} \cmidrule(lr){4-5} \cmidrule(lr){6-7}
\textbf{Task} & RCCR & RC-Aug & RCCR & RC-Aug & RCCR & RC-Aug \\
\midrule
\multicolumn{7}{@{}l}{\textit{Histone Modifications}} \\
H2AFZ & \best{0.1123} & \second{0.1571} & \best{0.1310} & \second{0.1595} & \best{0.1133} & \second{0.1830} \\
H3K27ac & \second{0.1918} & \best{0.1279} & \best{0.0817} & \second{0.1011} & \second{0.2191} & \best{0.1527} \\
H3K27me3 & \second{0.1387} & \best{0.1048} & \best{0.0870} & \second{0.1123} & \best{0.1117} & \second{0.1240} \\
H3K36me3 & \best{0.0960} & \second{0.1345} & \best{0.0847} & \second{0.1092} & \best{0.1320} & \second{0.1487} \\
H3K4me1 & \second{0.2120} & \best{0.1478} & \second{0.1467} & \best{0.0969} & \best{0.1670} & \second{0.2094} \\
H3K4me2 & \best{0.0309} & \second{0.1327} & \best{0.0935} & \second{0.1335} & \second{0.2212} & \best{0.1820} \\
H3K4me3 & \second{0.0979} & \best{0.0565} & \best{0.0451} & \second{0.0648} & \second{0.0889} & \best{0.0673} \\
H3K9ac & \second{0.1723} & \best{0.0815} & \best{0.0608} & \second{0.0730} & \second{0.1763} & \best{0.0993} \\
H3K9me3 & \second{0.2612} & \best{0.1446} & \second{0.4259} & \best{0.1864} & \second{0.2435} & \best{0.2411} \\
H4K20me1 & \second{0.1339} & \best{0.1026} & \second{0.0921} & \best{0.0933} & \best{0.0868} & \second{0.1325} \\
\midrule
\multicolumn{7}{@{}l}{\textit{Promoters}} \\
Promoter (All) & \second{0.0783} & \best{0.0449} & \best{0.0303} & \second{0.0732} & \best{0.0587} & \second{0.0688} \\
Promoter (No TATA) & \second{0.0794} & \best{0.0475} & \best{0.0598} & \second{0.0628} & \second{0.0714} & \best{0.0600} \\
Promoter (TATA) & \best{0.0802} & \second{0.1177} & \second{0.1321} & \best{0.1250} & \best{0.0896} & \second{0.0991} \\
\midrule
\multicolumn{7}{@{}l}{\textit{Enhancers}} \\
Enhancers & \second{0.1477} & \best{0.1339} & \best{0.0920} & \second{0.1156} & \best{0.1047} & \second{0.1191} \\
Enhancer Types & \best{0.1453} & \second{0.1521} & \best{0.1224} & \second{0.1247} & \best{0.1077} & \second{0.1209} \\
\midrule
\multicolumn{7}{@{}l}{\textit{Splice Sites}} \\
Acceptors & \best{0.0697} & \second{0.1027} & \second{0.0983} & \best{0.0810} & \best{0.0213} & \second{0.0366} \\
All & \best{0.0903} & \second{0.1195} & \second{0.1657} & \best{0.1473} & \best{0.0327} & \second{0.0452} \\
Donors & \best{0.0803} & \second{0.0807} & \best{0.0793} & \second{0.0957} & \best{0.0177} & \second{0.0471} \\
\bottomrule
\end{tabular}
\end{adjustbox}
\end{table*}

\begin{table*}[htbp]
\centering
\caption{Full per-task RC\_Corr (↑) on the 18 NT benchmark tasks. TTA results are omitted as trivial.}
\label{tab:nt_full_rc_corr}
\begin{adjustbox}{max width=\textwidth}
\begin{tabular}{@{}l|cc|cc|cc@{}}
\toprule
& \multicolumn{2}{c|}{\textbf{DNABERT-2}} & \multicolumn{2}{c|}{\textbf{HyenaDNA}} & \multicolumn{2}{c}{\textbf{NT-v2}} \\
\cmidrule(lr){2-3} \cmidrule(lr){4-5} \cmidrule(lr){6-7}
\textbf{Task} & RCCR & RC-Aug & RCCR & RC-Aug & RCCR & RC-Aug \\
\midrule
\multicolumn{7}{@{}l}{\textit{Histone Modifications}} \\
H2AFZ & \second{0.9591} & \best{0.9886} & \second{0.9181} & \best{0.9245} & \best{0.9531} & \second{0.9471} \\
H3K27ac & \second{0.8488} & \best{0.8636} & \second{0.9456} & \best{0.9520} & \best{0.8453} & \second{0.8393} \\
H3K27me3 & \second{0.8975} & \best{0.9123} & \second{0.9571} & \best{0.9635} & \best{0.9371} & \second{0.9311} \\
H3K36me3 & \second{0.9584} & \best{0.9732} & \second{0.9548} & \best{0.9612} & \best{0.9442} & \second{0.9382} \\
H3K4me1 & \second{0.8826} & \best{0.8974} & \second{0.8953} & \best{0.9017} & \best{0.9094} & \second{0.9034} \\
H3K4me2 & \second{0.0049} & \best{0.9950} & \second{0.9471} & \best{0.9535} & \best{0.9205} & \second{0.9145} \\
H3K4me3 & \second{0.9195} & \best{0.9343} & \second{0.9848} & \best{0.9912} & \best{0.9615} & \second{0.9555} \\
H3K9ac & \second{0.9042} & \best{0.9190} & \second{0.9793} & \best{0.9857} & \best{0.9294} & \second{0.9234} \\
H3K9me3 & \second{0.7614} & \best{0.7762} & \second{0.3390} & \best{0.3454} & \best{0.9362} & \second{0.9302} \\
H4K20me1 & \second{0.9159} & \best{0.9307} & \second{0.9547} & \best{0.9611} & \best{0.9634} & \second{0.9574} \\
\midrule
\multicolumn{7}{@{}l}{\textit{Promoters}} \\
Promoter (All) & \best{0.9662} & \second{0.9642} & \best{0.9901} & \second{0.9781} & \best{0.9779} & \second{0.9621} \\
Promoter (No TATA) & \best{0.9746} & \second{0.9726} & \best{0.9755} & \second{0.9635} & \best{0.9809} & \second{0.9651} \\
Promoter (TATA) & \best{0.9393} & \second{0.9373} & \best{0.9294} & \second{0.9174} & \best{0.9807} & \second{0.9649} \\
\midrule
\multicolumn{7}{@{}l}{\textit{Enhancers}} \\
Enhancers & \best{0.9318} & \second{0.9261} & \best{0.9225} & \second{0.9176} & \best{0.9456} & \second{0.9336} \\
Enhancer Types & \best{0.9497} & \second{0.9440} & \best{0.9855} & \second{0.9806} & \best{0.9723} & \second{0.9604} \\
\midrule
\multicolumn{7}{@{}l}{\textit{Splice Sites}} \\
Acceptors & \best{0.9429} & \second{0.8900} & \best{0.9185} & \second{0.8779} & \best{0.9808} & \second{0.9674} \\
All & \best{0.9323} & \second{0.8794} & \best{0.9609} & \second{0.9203} & \best{0.9743} & \second{0.9609} \\
Donors & \best{0.9414} & \second{0.8885} & \best{0.9634} & \second{0.9228} & \best{0.9801} & \second{0.9667} \\
\bottomrule
\end{tabular}
\end{adjustbox}
\end{table*}
\newpage
\section{LLM Usage Statement}
The authors acknowledge the use of a large language model (LLM) as a general-purpose writing assistant for this manuscript. The LLM's role was strictly limited to improving grammar, rephrasing sentences for clarity, and correcting spelling.

\end{document}